\DeclareTextSymbolDefault{\textquotedbl}{T1}
\providecommand{\tabularnewline}{\\}
\providecommand{\algorithmname}{Algorithm}
\theoremstyle{plain}
\newtheorem{thm}{\protect\theoremname}
\theoremstyle{definition}
\newtheorem{defn}[thm]{\protect\definitionname}
\theoremstyle{plain}
\newtheorem{prop}[thm]{\protect\propositionname}
\newtheorem{assu}{Assumption}
\def\BibTeX{{\rm B\kern-.05em{\sc i\kern-.025em b}\kern-.08em
    T\kern-.1667em\lower.7ex\hbox{E}\kern-.125emX}}
\providecommand{\definitionname}{Definition}
\providecommand{\propositionname}{Proposition}
\providecommand{\theoremname}{Theorem}
\providecommand{\definitionname}{Definition}
\providecommand{\propositionname}{Proposition}
\providecommand{\theoremname}{Theorem}
\providecommand{\definitionname}{Definition}
\providecommand{\propositionname}{Proposition}
\providecommand{\theoremname}{Theorem}
\providecommand{\definitionname}{Definition}
\providecommand{\propositionname}{Proposition}
\providecommand{\theoremname}{Theorem}
\begin{document}
\title{Measure Contribution of Participants in Federated Learning }
\author{\IEEEauthorblockN{Guan Wang} \IEEEauthorblockA{\textit{Digital and Smart Analytics} \\
 Swiss Re\\
 Hong Kong, China \\
 guan\_wang@swissre.com} \and \IEEEauthorblockN{Charlie Xiaoqian Dang} \IEEEauthorblockA{\textit{Digital and Smart Analytics} \\
 Swiss Re\\
 Beijing, China \\
 charlie\_dang@swissre.com} \and \IEEEauthorblockN{Ziye Zhou} \IEEEauthorblockA{\textit{Research Center} \\
 Ping An P\&C Insurance Co., Ltd.\\
 Shenzhen, China \\
 zhouziye659@pingan.com.cn} }
\maketitle
\begin{abstract}
Federated Machine Learning (FML) creates an ecosystem for multiple
parties to collaborate on building models while protecting data privacy
for the participants. A measure of the contribution for each party
in FML enables fair credits allocation. In this paper we develop simple
but powerful techniques to fairly calculate the contributions of multiple
parties in FML, in the context of both horizontal FML and vertical
FML. For Horizontal FML we use deletion method to calculate the grouped
instance influence. For Vertical FML we use Shapley Values to calculate
the grouped feature importance. Our methods open the door for research
in model contribution and credit allocation in the context of federated
machine learning. 
\end{abstract}

\begin{IEEEkeywords}
federated learning, machine learning, deletion, shapley values 
\end{IEEEkeywords}

\section{Introduction}

Federated Learning or Federated Machine Learning (FML) \cite{b1}
is introduced to solve privacy issues in machine learning using data
from multiple parties. Instead of transferring data directly into
a centralized data warehouse for building machine learning models,
Federated Learning allows each party to own the data in its own place
and still enables all parties to build a machine learning model together.
This is achieved either by building a meta-model from the sub-models
each party builds so that only model parameters are transferred, or
by using encryption techniques to allow safe communications in between
different parties \cite{b2}.

Federated Learning opens new opportunities for many industry applications.
Companies have been having big concerns on the protection of their
own data and are unwilling to share with other entities. With Federated
Learning, companies can build models together without disclosing their
data and share the benefit of machine learning. An example of Federated
Learning use case is in insurance industry. Primary insurers, reinsurers
and third-party companies like online retailers can all work together
to build machine learning models for insurance applications. Number
of training instances is increased by different insurers and reinsurers,
and feature space for insurance users is extended by third-party companies.
With the help of Federated Learning, machine learning can cover more
business cases and perform better.

For the the ecosystem of Federated Learning to work, we need to encourage
different parties to contribute their data and participate in the
collaboration federation. A credit allocation and rewarding mechanism
is crucial for the incentives current and potential participants of
Federated Learning. A fair measure of the contribution for each party
in Federated Learning enables fair credits allocation. Data quantity
alone is certainly not enough, as one party may contribute lots of
data that doesn't help much on building the model. We need a way to
fairly measure the data quality overall and hence decide the contribution.

In this paper we develop simple but powerful techniques to fairly
calculate the contributions of multiple parties in FML, in the context
of both horizontal FML and vertical FML. For Horizontal FML, each
party contributes part of the training instances. We use deletion
method to calculate the grouped instance influence. Each time we delete
the instances provided from one certain party and retrain the model,
and calculate the difference of the prediction results between the
new model and the original one, and use this measure of difference
to decide the contribution of this certain party. For Vertical FML,
each party owns party of the feature space. We use Shapley Values
\cite{b21} to calculate the grouped feature importance, and use this
measure of importance to decide the contribution of each party. The
method we propose in our knowledge is the first attempt of research
on model contribution and credit allocation in the context of federated
machine learning.

In the next chapters of this paper, we first briefly introduce Federated
Learning. We then cover the Federated Deletion method and Federated
Shap method we propose on measuring contributions of multiple parties
for horizontal and vertical FML models, followed by some experiments.
We conclude the paper with some discussions in the last chapter.

\section{Federated Learning}

Federated Learning originated from some academic papers like \cite{b1,b3}
and a follow-up blog from Google in 2017. The idea is that Google
wants to train its own input method on its Android phones called ``Gboard''
but does not want to upload the sensitive keyboard data from their
users to Google's own servers. Rather than uploading user's data and
training models in the cloud, Google lets users train a separate model
on their own smartphones (thanks to the neural engines from several
chip manufacturers) and upload those black-box model parameters from
each of their users to the cloud and merge the models, update the
official centralized model, and push the model back to Google users.
This not only avoids the transmission and storage of user's sensitive
personal data, but also utilizes the computational power on the smartphones
(a.k.a the concept of Edge Computing) and reduce the computation pressure
from their centralized servers.

When the concept of Federated Learning was published, Google's focus
was on the transmission of models as the upload bandwidth of mobile
phones is usually very limited. One possible reason is that similar
engineering ideas have been discussed intensively in distributed machine
learning. The focus of Federated Learning was thus more on the ``engineering
work'' with no rigorous distributed computing environment, limited
upload bandwidth and slave nodes as massive number of users.

Data privacy is becoming an important issue, and lots of relating
regulations and laws have been taken into action by authorities and
governments\cite{b4,b5}. The companies that have been accumulating
tons of data and have just started to make value of it now have their
hands tightened. On the other hand, all companies value a lot their
own data and feel reluctant from sharing data with others. Information
islands kill the possibility of cooperation and mutual benefit. People
are looking for a way to break such prisoner dilemma while complying
with all the regulations. Federated Learning was soon recognized as
a great solution for encouraging collaboration while respecting data
privacy.

\cite{b2} describes Federated Learning in three categories: Horizontal
Federated Learning, Vertical Federated Learning and Federated Transfer
Learning. Such categorization extends the concept of Federated Learning
and clarify the specific solutions under different use cases.

\emph{Horizontal Federated Learning} applies to circumstances where
we have a lot of overlap on features but only a few on instances.
This refers to the Google Gboard use case and models can be ensembled
directly from the edge models.

\emph{Vertical Federated Learning} refers to where we have many overlapped
instances but few overlapped features. An example is between insurers
and online retailers. They both have lots of overlapped users, but
each owns their own feature space and labels. Vertical Federated Learning
merges the features together to create a larger feature space for
machine learning tasks and uses homomorphic encryption to provide
protection on data privacy for involved parties.

\emph{Federated Transfer Learning} uses Transfer Learning \cite{b6}
to improve model performance when we have neither much overlap on
features nor on instances.

An example in insurance industry to illustrate the idea is the following.
Horizontal FML corresponds to primary insurers working with a reinsurer.
For the same product primary insurers share the similar features.
Vertical FML corresponds to reinsurer working with another third-party
data provider like online retailer. An online retailer will have more
features for a certain policyholder that can increase the prediction
power for models built for insurance.

For a detailed introduction of Federated Learning and their respective
technology that is used, please refer to \cite{b2}.

\section{Deletion Method for Horizontal FML}

Most model interpretation methods can be applied, with some minor
modifications, to contribution measure for Horizontal Federated Learning
as all parties have data for the full feature space. There is no special
issue for interpreting prediction results on both training data and
new data, for both specific single predictions as granular check or
for batch pre-dictions as holistic check.

Approaches to identifying influential instances, such as deletion
diagnostics \cite{b8} and influence functions \cite{b7}, can be
used to measure the importance of individuals to a machine learning
model. Here we propose a method based on deletion diagnostics to measure
contributions of different parties for horizontal FML.

Deletion diagnostics is intuitive. With the deletion approach, we
retrain the model each time an instance is omitted from training dataset
and measure how much the prediction of retrained model changes. Supposing
we are evaluating the effect of the $i$th instance on the model predictions,
the influence measure can be formulated as follows, 
\begin{equation}
\text{Influence}^{-i}=\frac{1}{n}\sum_{j=1}^{n}|\hat{y}_{j}-\hat{y}_{j}^{-i}|,
\end{equation}
where $n$ is the size of dataset, $\hat{y}_{j}$ is the prediction
on $j$th instance made by the model trained on all data, and $\hat{y}_{j}^{-i}$
is the prediction on $j$th instance made by model trained with the
$i$th instance omitted.

For one party in horizontal FML with a subset of instances $D$, we
define the contribution as the total influence of all instances it
posseses in the following form, 
\begin{equation}
\text{Influence}^{-D}=\sum_{i\in D}\text{Influence}^{-i}.\label{eq:influenceD}
\end{equation}
We propose an approximation algorithm to implement the above influence
measure, considering a batch of instances as a whole during each deletion,
which is shown in Algorithm \ref{algo:influence}.

\begin{algorithm}
\caption{Approximating influence estimation for each party in horizontal FML}
\label{algo:influence} \begin{algorithmic} \Input \State number
of parties $K$, model $f$ \State instance subsets $D_{1},\dots,D_{K}$
\EndInput \Output \State Influence measure $\text{Influence}^{-D_{k}}$
for $k=1,\dots,K$ \EndOutput \ForAll{k=1,\dots ,K} \State delete
$D_{k}$ from training dataset \State retrain model $f'$ \State
compute $\text{Influence}^{-D_{k}}=\frac{1}{n}\sum_{j}|\hat{y}_{j}-\hat{y}_{j}^{-D}|$
\EndFor \State return $\text{Influence}^{-D_{k}}$ for $k=1,\dots,K$
\end{algorithmic} 
\end{algorithm}

\section{Shapley Values for Vertical FML}

In this section we focus on the contribution measure of different
parties in vertical Federated Machine Learning. In the vertical mode
a party contributes to FML model by sharing its features with other
parties, which means the contribution of the party can be represented
by the combined contributions of its shared features. Therefore, we
first introduce how to distribute the contributions among individual
features and then show the extension to measuring contribution of
grouped features.

\subsection{Shapley Values for Individual Feature}

Generally, we are interested in how a particular feature value influences
the model prediction. For an additive model like linear regression
\begin{equation}
f(x)=\beta_{0}+\sum_{i=1}^{n}\beta_{i}x_{i},
\end{equation}
where $\beta_{i}$ is the model coefficient and $x_{i}$ the feature
value, we can measure the influence of $X_{i}=x_{i}$ according to
the situational importance \cite{b10} 
\begin{equation}
\varphi_{i}(x)=\beta_{i}x_{i}-\beta_{i}\mathbb{E}[X_{i}].
\end{equation}
The situational importance is the difference between what a feature
contributes when its value is $x_{i}$ and what it is expected to
contribute. For a more general model which we treat as a black box,
the feature influence can be computed in the following way similar
to the situational importance: 
\begin{equation}
\varphi_{i}(x)=f(x_{1},\dots,x_{n})-\mathbb{E}[f(x_{1},\dots,x_{i},\dots,x_{n})],\label{eq:generalSituationImportance}
\end{equation}
which is the difference between a prediction for an instance and the
expected prediction for the same instance if the \textit{i}th feature
had not been known.

The Shapley value \cite{b11}, which is originated from coalitional
game theory with proven theoretical properties, provides an effective
approach to distribute contributions among features in a fair way
by assigning to each feature a number which denotes its influence
\cite{b12}\cite{b13}\cite{b14}\cite{b15}. In a coalitional game,
it is assumed that a grand coalition formed by \textit{n} players
has a certain worth and each smaller coalition has its own worth.
The goal is to ensure that each player receives his fair share, taking
into account all sub-coalitions. In our case, the Shapley value is
defined as 
\begin{equation}
\phi_{i}(x)=\sum\limits _{Q\subseteq S\setminus\{i\}}\frac{|Q|!(|S|-|Q|-1)!}{|S|!}(\Delta_{Q\cup\{i\}}(x)-\Delta_{Q}(x)),\label{eq:shapleyValue}
\end{equation}
where $S$ is s feature index set, $Q\subseteq S=\{1,2,\dots,n\}$
is a subset of features, $x$ is the vector of feature values of the
instance in consideration and $|\cdot|$ is the size of a feature
set. $\Delta_{Q}(x)$ denotes the influence of a subset of feature
values, which generalizes \eqref{eq:generalSituationImportance},
in the following form 
\begin{equation}
\Delta_{Q}(x)=\mathbb{E}[f|X_{i}=x_{i},\forall i\in Q]-\mathbb{E}[f].
\end{equation}

The Shapley value $\phi_{i}(x)$ gives a strong solution to the problem
of measuring individual feature contribution. However, computing \eqref{eq:shapleyValue}
has an exponential time complexity, making the method infeasible for
practical scenarios. An approximation algorithm with Monte-Carlo sampling
is proposed in \cite{b12} to reduce the computational complexity:
\begin{equation}
\phi_{i}(x)=\frac{1}{M}\sum\limits _{m=1}^{M}\left(f(x_{+i}^{m})-f(x_{-i}^{m})\right),\label{eq:shapleyValueApprox}
\end{equation}
where $M$ is the number of iterations. $f(x_{+i}^{m})$ is the prediction
for instance $x$, with a random number of feature values replaced
by feature values from a randomly selected instance $z$, except for
the respective value of feature $i$. The vector $x_{-i}^{m}$ is
almost identical to $x_{+i}^{m}$, except that the value $x_{i}^{m}$
is taken from the sampled $z$. The approximation algorithm is summaried
in Algorithm \ref{algo:individualShapley}.

\begin{algorithm}
\caption{Approximating Shapley estimation for individual feature value}
\label{algo:individualShapley} \begin{algorithmic} \Input \State
number of iterations $M$, instance feature vector $x$, \State model
$f$, feature space $\mathcal{X}$, and feature index $i$ \EndInput
\Output \State Shapley value for the value of the $i$th feature
$\phi_{i}(x)$ \EndOutput \ForAll {$m=1,\dots,M$} \State select
a random instance $z\in\mathcal{X}$ \State select a random permutation
of the feature values \State construct two new instances: \State
~~ $x_{+i}=(x_{1},\dots,x_{i-1},x_{i},z_{i+1},\dots,z_{n})$ \State
~~ $x_{-i}=(x_{1},\dots,x_{i-1},z_{i},z_{i+1},\dots,z_{n})$ \State
compute marginal contribution $\phi_{i}^{m}=f(x_{+i})-f(x_{-i})$
\EndFor \State compute Shapley value $\phi_{i}(x)=\frac{1}{M}\sum_{m=1}^{M}\phi_{i}^{m}$
\end{algorithmic} 
\end{algorithm}

\subsection{Shapley Values for Grouped Features}

Vertical Federated Learning raises new issues for measuring contributions
of multiple parties where the feature space is divided into different
parts. Directly using methods like Shapley values for each prediction
will very likely reveal the protected feature value from the other
parties and cause privacy issues. Thus it is not trivial to develop
a safe mechanism for vertical Federated Learning and find a balance
between contribution measurement and data privacy.

We propose a variant version of the approach proposed in \cite{b13}
to use Shapley value for measuring contributions of different parties
in vertical FML. Here we take the dual-party vertical Federated Learning
as an example, while the idea can be extended to multiple parties.
For the $k$th instance, the label is $y_{k}$ and one party owns
part of the features $x^{h,k}$ and the other party owns the rest
part of the features $x^{g,k}$, where $k=1,\dots,K$ as we suppose
both parties have $K$ overlapped instances with the same IDs. By
using vertical FML, the two parties collaborate to develop a machine
learning model for predicting labels $Y$. We first give some definitions
and assumptions in this problem and then propose an approximation
algorithm to compute the Shapley group value for measuring the contributions
of different parties. 
\begin{defn}
(United Federated Feature). For the vertical FML with a set of parties
$G$ and a set of features $S$, the united federated feature $x^{fed}$
is a combination feature of the features $x^{g}\in X^{g}\subset S$
for party $g\in G$. 
\end{defn}
\noindent We treat a united federated feature as a single feature
since individual features of each party are private and not visible
to other parties. 
\begin{defn}
(Shapley Group Value). The Shapley group value is the group value
that sums the individual Shapley values for all elements in the group.
Formally, the Shapley group value for a subset $P\subset S$ is given
by 
\begin{equation}
\phi_{P}(x)=\sum\limits _{i\in P}\phi_{i}(x).\label{eq:groupShapley}
\end{equation}
\end{defn}
\noindent The Shapley group value denotes the contribution of a subset
of features. 
\begin{defn}
(Shapley Group Interaction Index). The Shapley group interaction index
is the additional combined feature effect of group $P\subset G$ given
by 
\begin{equation}
\varphi_{P}(x)=\sum\limits _{Q\subseteq S\setminus P}\frac{|Q|!(|S|-|Q|-1)!}{|S|!}\delta_{P}(x),
\end{equation}
where 
\begin{equation}
\delta_{P}(x)=\Delta_{Q\cup P}(x)-\sum\limits _{i\in P}\Delta_{Q\cup\{i\}}(x)+(|P|-1)\Delta_{Q}(x).\label{eq:deltaInteraction}
\end{equation}
\end{defn}
\noindent The Shapley group interaction index is a variant of the
Shapley interaction index \cite{b16} which extends the definition
of the combined feature effect from two features to a group of features.

\begin{assu}\label{ass:interaction} The Shapley group interaction
index for feature set $X^{g}\subset S$ of any party in vertical FML
is zero, i.e., $\varphi_{X^{g}}(x)=0$, $\forall g\in G$. \end{assu}

\begin{assu}\label{ass:dummy} All features in the feature set $X^{g}$
of party $g$ are dummy features with $\Delta_{Q\cup\{j\}}(x)=\Delta_{Q}(x)+\Delta_{\{j\}}(x)$,
$\forall j\in X^{g}$, $\forall g\in G$ and $\forall Q\subset S$.
\end{assu} 
\begin{prop}
\label{prop:groupShap} If either of Assumption \ref{ass:interaction}
and \ref{ass:dummy} holds, then the Shapley group value for a party
$g\in G$ with feature set $X^{g}$ is given by 
\begin{equation}
\phi_{X^{g}}=\sum\limits _{Q\subseteq S\setminus\{j^{fed}\}}\frac{|Q|!(|S|-|Q|-1)!}{|S|!}(\Delta_{Q\cup\{j^{fed}\}}(x)-\Delta_{Q}(x)),\label{eq:guestGroupShapley}
\end{equation}
where $j^{fed}$ is the index of the united federated feature $x^{fed}$. 
\end{prop}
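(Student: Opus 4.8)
The plan is to show that the sum of individual Shapley values over the members of $X^g$ collapses to the Shapley value obtained by treating the whole group as the single united federated feature $j^{fed}$, i.e. by running the standard computation \eqref{eq:shapleyValue} on the reduced feature set in which $X^g$ is amalgamated into one coordinate, which is exactly the right-hand side of \eqref{eq:guestGroupShapley}. I would work from the random-order (permutation) representation of the Shapley value, writing $\phi_i(x)=\frac{1}{|S|!}\sum_{\pi}\bigl(\Delta_{B_\pi(i)\cup\{i\}}(x)-\Delta_{B_\pi(i)}(x)\bigr)$, where $\pi$ ranges over the orderings of $S$ and $B_\pi(i)$ is the set of features preceding $i$ in $\pi$. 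Summing this over $i\in X^g$ and regrouping the orderings is the skeleton of the argument.

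First I would isolate the orderings $\pi$ in which the members of $X^g$ occur consecutively. For such an ordering the sum $\sum_{i\in X^g}\bigl(\Delta_{B_\pi(i)\cup\{i\}}-\Delta_{B_\pi(i)}\bigr)$ telescopes to $\Delta_{B\cup X^g}-\Delta_B$, where $B$ is the block of predecessors of the whole group; these are precisely the marginal contributions of the amalgamated feature $j^{fed}$. A counting check, accounting for the $|X^g|!$ internal permutations of the block, then matches this ``consecutive'' part to the right-hand side of \eqref{eq:guestGroupShapley} up to the factorial normalization discussed below. This step is essentially bookkeeping and I expect it to go through mechanically.

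The substance is to show that the residual contribution, coming from orderings in which features outside $X^g$ are interleaved among the members of $X^g$, vanishes under either assumption. Under Assumption \ref{ass:dummy} this is immediate and clean: additivity gives $\Delta_{Q\cup\{i\}}(x)-\Delta_Q(x)=\Delta_{\{i\}}(x)$ for every $i\in X^g$ and every $Q$, so each marginal contribution is order-independent; consequently $\phi_i(x)=\Delta_{\{i\}}(x)$ and both sides of \eqref{eq:guestGroupShapley} equal $\sum_{i\in X^g}\Delta_{\{i\}}(x)$, with no interleaving defect to control. Under Assumption \ref{ass:interaction} I would instead decompose each coalition $Q$ in \eqref{eq:shapleyValue} as $Q=R\cup T$ with $R\subseteq S\setminus X^g$ and $T\subseteq X^g$, re-express the residual as a weighted combination of the group interaction terms $\delta_{X^g}(x)$ of \eqref{eq:deltaInteraction}, and invoke $\varphi_{X^g}(x)=0$ to kill it.

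The hard part is exactly this last cancellation for Assumption \ref{ass:interaction}: one must verify that the interleaved marginal contributions, once summed over all orderings with their multiplicities and rewritten through $Q=R\cup T$, reassemble into the interaction index with precisely the weights $\frac{|Q|!(|S|-|Q|-1)!}{|S|!}$ that appear in the definition of $\varphi_{X^g}(x)$, and that the factorial normalizations of the original feature set of size $|S|$ reconcile against those of the reduced set of size $|S|-|X^g|+1$. Getting these combinatorial coefficients to line up is the delicate bookkeeping; the dummy-feature route of Assumption \ref{ass:dummy} is attractive precisely because every marginal contribution is constant there, so it sidesteps the reconciliation entirely.
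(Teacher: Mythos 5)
Your Assumption~\ref{ass:dummy} branch is sound, and is actually more direct than the paper's: the paper proves dummy $\Rightarrow$ the quantity in \eqref{eq:deltaInteraction} vanishes and then routes through the interaction case, whereas you observe that every marginal contribution of $i\in X^{g}$ is the constant $\Delta_{\{i\}}(x)$, so both sides of \eqref{eq:guestGroupShapley} equal $\sum_{i\in X^{g}}\Delta_{\{i\}}(x)$ because the Shapley weights sum to one. Your random-order bookkeeping for the consecutive block is also fine as far as it goes: the consecutive orderings contribute a uniform constant multiple $c=|X^{g}|!\,(|S|-|X^{g}|+1)!/|S|!$ of the right-hand side, so the factorial reconciliation you worry about is unproblematic (though note $c<1$, so the interleaved orderings must supply the remaining $(1-c)$ fraction of the right-hand side in addition to any defect terms, not merely vanish).

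The genuine gap is in the Assumption~\ref{ass:interaction} branch, at exactly the step you flag: the residual is \emph{not} a multiple of $\varphi_{X^{g}}(x)$, so the scalar condition $\varphi_{X^{g}}(x)=0$ cannot kill it. Writing $d(Q)=\Delta_{Q\cup X^{g}}(x)-\sum_{i\in X^{g}}\Delta_{Q\cup\{i\}}(x)+(|X^{g}|-1)\Delta_{Q}(x)$ for the coalition-wise defect, the residual is a \emph{signed} combination of the $d(Q)$, while $\varphi_{X^{g}}(x)$ is a fixed combination with strictly positive weights; these functionals are linearly independent. Concretely, take $S=\{1,2,3\}$, $X^{g}=\{1,2\}$, and (writing $v(Q)$ for $\Delta_{Q}(x)$, with $v(\emptyset)=0$) set $v(\{1\})=v(\{2\})=v(\{3\})=v(\{1,3\})=v(\{2,3\})=0$, $v(\{1,2\})=1$, $v(\{1,2,3\})=-2$. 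Then $d(\emptyset)=1$, $d(\{3\})=-2$, so $\varphi_{X^{g}}(x)=\tfrac{1}{3}(1)+\tfrac{1}{6}(-2)=0$ and Assumption~\ref{ass:interaction} holds; yet $\phi_{1}(x)+\phi_{2}(x)=-1$ while the right-hand side of \eqref{eq:guestGroupShapley} equals $\tfrac{1}{2}v(\{1,2\})+\tfrac{1}{2}\bigl(v(\{1,2,3\})-v(\{3\})\bigr)=-\tfrac{1}{2}$. (In this example the gap is $\tfrac{1}{6}\bigl(d(\{3\})-d(\emptyset)\bigr)$, which survives $\varphi_{X^{g}}(x)=0$.) What your argument needs is the pointwise strengthening $d(Q)=0$ for every $Q\subseteq S\setminus X^{g}$, i.e.\ $\Delta_{Q\cup X^{g}}(x)-\Delta_{Q}(x)=\sum_{i\in X^{g}}(\Delta_{Q\cup\{i\}}(x)-\Delta_{Q}(x))$ coalition by coalition; under that reading your interleaving bookkeeping does close (I checked it reproduces the missing $(1-c)$ fraction in the example above). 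It is worth knowing that the paper's own proof makes the same silent leap: its displayed identity $\sum_{i\in P}(\Delta_{Q\cup\{i\}}(x)-\Delta_{Q}(x))=\Delta_{Q\cup P}(x)-\Delta_{Q}(x)$ is precisely pointwise $d(Q)=0$, which Assumption~\ref{ass:interaction} as stated does not imply. So either adopt the pointwise version of the assumption, or restrict your claim to the dummy branch, which is correct as written.
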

\begin{proof} We consider the vertical FML scenario where the other
parties act collaboratively as a whole and reach an agreement on a
protocol of sharing and permulation among all their features when
computing the Shapley group value for one party. Actually this reduces
to the dual-party FML case. If Assumption \ref{ass:interaction} holds,
then 
\[
\sum\limits _{i\in P}(\Delta_{Q\cup\{i\}}(x)-\Delta_{Q}(x))=\Delta_{Q\cup P}(x)-\Delta_{Q}(x).
\]
According to the definition of federated feature, we can treat $\{j^{fed}\}$
as $X^{g}$. Thus putting the above equation into \eqref{eq:shapleyValue}
and \eqref{eq:groupShapley} gives \eqref{eq:guestGroupShapley}.
If Assumption \ref{ass:dummy} holds, then \begin{IEEEeqnarray}{rCl} \Delta_{Q \cup \{j_1^g,\dots ,j_k^g\}} (x) & = & \Delta_{Q \cup \{j_1^g,\dots ,j_{k-1}^g\}} (x) + \Delta_{\{j_k^g\}} (x) \nonumber \\  & = & \dots \nonumber \\  & = & \Delta_{Q} (x) + \sum_{j \in X^g} \Delta_{\{j\}} (x) \nonumber \end{IEEEeqnarray} 

The above equation together with the dummy property makes \eqref{eq:deltaInteraction}
equal to zero. Thus Assumption \ref{ass:interaction} holds. \end{proof}

Proposition \ref{prop:groupShap} indicates that we can measure the
importance of a feature subset without revealing the details of any
private feature of a party in the vertical FML. Suppose we want to
measure the contribution of one party to the prediction of an instance
by looking at the Shapley group value of the feature set shared by
the party. Instead of giving out individual Shapley values for all
features in its feature space, we combine the private features as
one united federated feature, and compute the Shapley value for this
federated feature together with the features of all the other parties.

Since this method requires to turn on and off certain features for
calculating the Shapley value, for the federated feature we will need
a specific ID to inform the party in consideration to return its part
of the prediction with all its features turned off. For models that
takes in NA values, this mean that the features will be set to NA.
For models that cannot handle missing values, we follow the practice
of \cite{b13} to set the feature values to be the median value of
all the instances as the reference value.

Although the two assumptions are quite strong, the experiment results
show that the approximation algorithm works well in real scenarios.
Also, as discussed above the approximation algorithm for the dual-party
case can be extended to measuring contributions of multiple parties
as long as an agreement on a protocol of feature sharing and permutation
is reached. In summary, the process of computing Shapley group value
for one party is described in Algorithm \ref{algo:groupShapley}.
Repeating the estimation algorithm for all parties gives their corresponding
contribution measure.

\begin{algorithm}
\caption{Approximating Shapley estimation for federated feature for one party
in vertical FML}
\label{algo:groupShapley} \begin{algorithmic} \Input \State number
of iterations $M$, instance feature vector $x$, \State model $f$,
the federated feature index $j^{fed}$ , \State the index set of
other features $I^{h}$ , party $g$ \EndInput \Output \State Shapley
group value $\phi_{j^{fed}}$ for $j^{fed}$ \EndOutput \ForAll
{$m=1,\dots,M$} \State select a subset $Q\in I^{h}\cup\{j^{fed}\}$
\State construct new instance $x'$: \State ~~ Set $x'_{k}=x_{k}$
for $k\in Q$ \State ~~ Set $x'_{k}$ to reference value for $k\notin Q$
\If{$j^{fed}\in Q$} \State Send encrypted ID of $x$ to party
$g$ \State Set $x'_{j^{fed}}=x_{j^{fed}}$ \Else \State Send special
ID to party $g$ \State Set $x'_{j^{fed}}$ to reference value \EndIf
\State Run federated model prediction for $x'$ \State Save prediction
result of $Q$ \EndFor \State compute $\phi_{j^{fed}}$ using Algorithm
\ref{algo:individualShapley} \State return Shapley group value $\phi_{j^{fed}}$
\end{algorithmic} 
\end{algorithm}

\section{Experiment}

We developed the algorithms for calculating multi-party contribution
as derived in Section 3 and 4. In this section, we test our algorithm
by training a machine learning model on the Cervical cancer (Risk
Factors) Data Set \cite{b9} and experimenting by calculating the
participant contributions on both horizontal and vertical FML setups.
The Cervical cancer dataset is used to predict whether an individual
female will get cervical cancer given indicators and risk factors
as features of the dataset, which is showed in Table~\ref{tab1}.
We normalized the data and used Scikit-learn to train a SVM (Support
Vector Machine) model for the cervical cancer classification task.

\begin{table}[htbp]
\caption{Cervical cancer (Risk Factors) Data Set Attribute Information}

\centering{}%
\begin{tabular}{|c|c|}
\hline 
\textbf{Attribute}  & \multicolumn{1}{c|}{\textbf{Type}}\tabularnewline
\hline 
Age  & int\tabularnewline
\hline 
Number of sexual partners  & int\tabularnewline
\hline 
First sexual intercourse (age)  & int\tabularnewline
\hline 
Num of pregnancies  & int\tabularnewline
\hline 
Smokes  & bool\tabularnewline
\hline 
Smokes (years)  & int\tabularnewline
\hline 
Hormonal Contraceptives  & bool\tabularnewline
\hline 
Hormonal Contraceptives (years)  & int\tabularnewline
\hline 
IUD  & bool\tabularnewline
\hline 
IUD (years)  & int\tabularnewline
\hline 
STDs  & bool\tabularnewline
\hline 
STDs (number)  & int\tabularnewline
\hline 
STDs: Number of diagnosis  & int\tabularnewline
\hline 
STDs: Time since first diagnosis  & int\tabularnewline
\hline 
STDs: Time since last diagnosis  & int\tabularnewline
\hline 
Biopsy$^{\mathrm{a}}$  & bool\tabularnewline
\hline 
\multicolumn{2}{l}{$^{\mathrm{a}}$Target Variable}\tabularnewline
\end{tabular}\label{tab1} 
\end{table}

The contribution of the FML participants can be of two ways, namely
the horizontal FML setup where we use deletion method for indicating
group instance importance, and the vertical FML setup where Shapley
value is used for evaluating importance of features from different
parties. For purpose of illustration, we separately considered the
experiments in order to illustrate that for both horizontal FML and
vertical FML our proposed methods can give a reasonable explanation
for the contribution of multiple participants.

\subsection{Deletion Method (Horizontal FML)}

As we explained in Section 3, the deletion method can be used to evaluate
importance of a single instance and then generalized to the scenario
where different groups of instances are from different parties. The
experiments are performed on the Cervical dataset with ``Biopsy''
as the target value. For simplicity, we only considered binary classification
problem, where the Biopsy takes the value Health and Cancer. Since
the deletion method is defined for the training process, without losing
the generality we don't split the Cervical dataset in the experiment,
so the training set has the number of instances the same as the entire
dataset. We use SVM as the classification algorithm, where the output
is set to 'probability' and the kernel is RBF (Radial Basis Function)
with the coefficient as 1/(number of features). In order to simulate
the participant contribution, we evenly split the dataset with a given
number of instances, so that in our experiment we conceptually build
up a Horizontal Federated Learning ecosystem with five players. We
used deletion method to calculate the contribution of those five participants
and the results are shown in Fig.~\ref{fig}.

\begin{figure}[htbp]
\centerline{\includegraphics[scale=0.4]{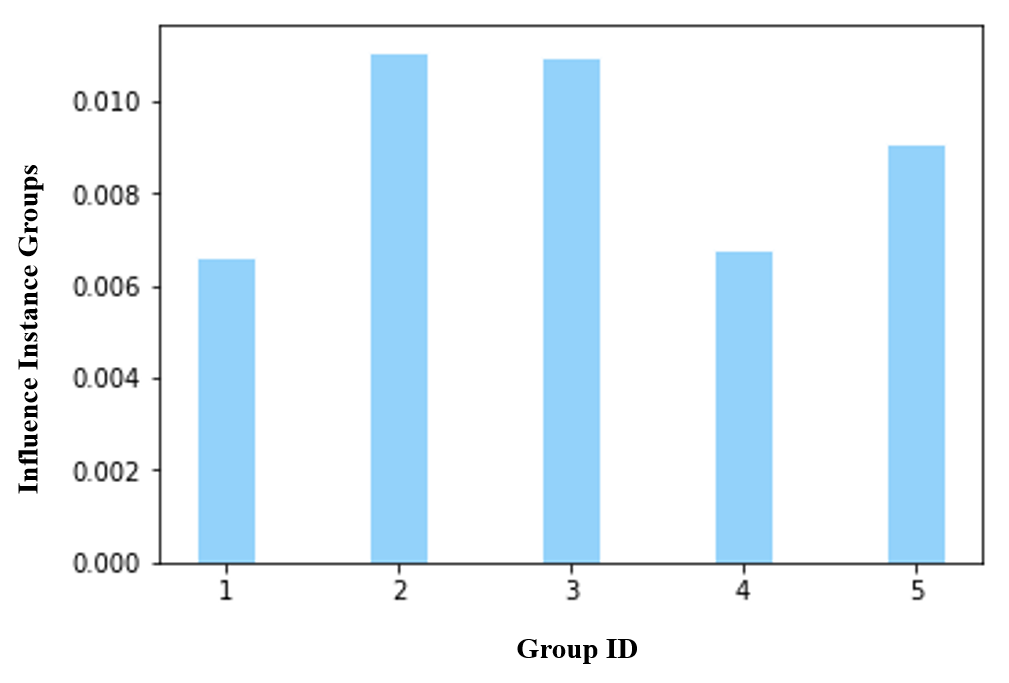}} \caption{The importance of instance groups of cervical data. We simulated five
parties and each party has same number of training instances. The
vertical axis shows the value of horizontal FML instance group importance
value.}
\label{fig} 
\end{figure}

\subsection{Shapley Value (Vertical FML)}

We also did the experiments to calculate the Shapley value for feature
importance, where we simulate the vertical FML ecosystem and each
participant shares a certain part of the feature space. The experiments
are performed on the same Cervical cancer dataset as explained in
the previous session. We randomly shuffled the data and used 70\%
instances for training and 30\% for testing. For testing data the
accuracy reaches 95.42\%. The algorithm's setup is exactly the same
as the experiment in the previous session. In order to avoid the inconsistence
due to the algorithm's hyperparameter choices, the random state for
splitting and shuffling the dataset is set to the same random seed.

As the first demonstration, we pick one specific instance from the
training data, run the prediction and test our Shapley Federated algorithm
for feature importance. The result can be seen on Fig.~\ref{fig2}.

\begin{figure}[htbp]
\centerline{\includegraphics[scale=0.5]{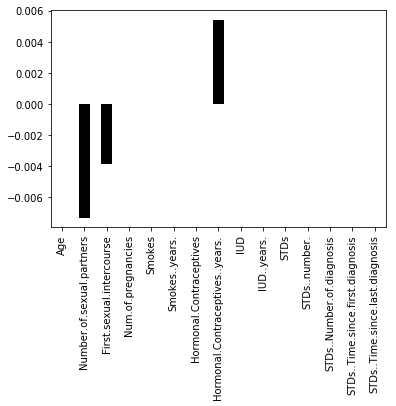}} \caption{The Shapley value for predicting one instance in Cervical cancer dataset.
This is the demonstration that Shapley value can give a reasonable
explanation for feature importance on each prediction.}
\label{fig2} 
\end{figure}

Following the demonstration, we then considered the Vertical FML ecosystem
framework. We first calculated the Shapley value for the whole feature
space, as shown in Fig.~\ref{fig3}, which directly reflects the
importance for different features as normal Shapley value indicats.

\begin{figure}[htbp]
\centerline{\includegraphics[scale=0.6]{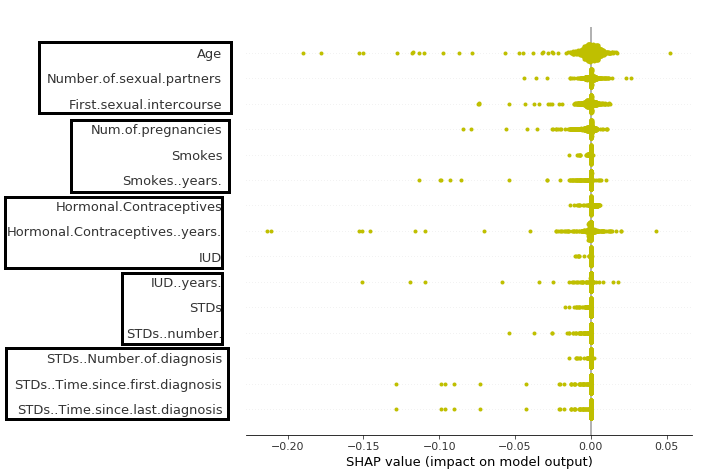}}\centerline{ \includegraphics[scale=0.6]{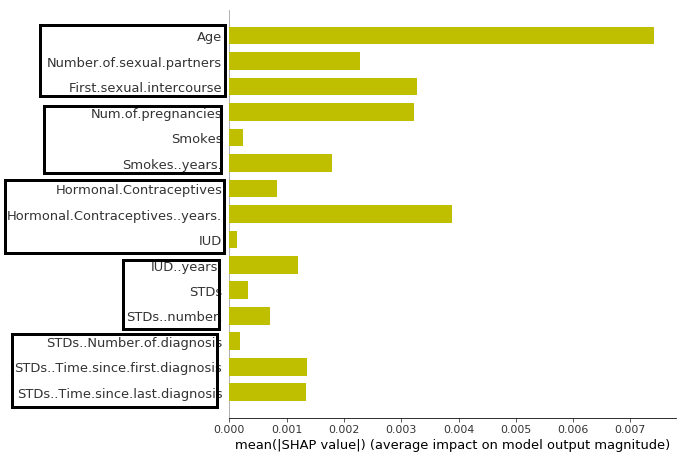}}\caption{Feature Importance (Shapley values) for 855 instances for the whole
feature space. Above one is the scatter plot for each prediction on
the data, below one is bar plot for each feature's total contribution
from all the predictions.}
\label{fig3}
\end{figure}

We then simulated the vertical FML for multiple participants, where
we evenly separate the 15 features into 5 groups and each group represents
a single participants with 3 features. Each time we group the features
from one party together as the federated feature, and run the Shapley
value algorithms to calculate the feature importance for this single
federated feature together with the other individual features from
the other participants. The simulated results are showed in Fig.~\ref{fig5}
and Fig.~\ref{fig6}. Another way of doing this is to use federated
features all together for all the participants and calculate the Shapley
value at one go. We expect that this will give less accurate results.
Our experiment indicates that in the multi-party Vertical FML setup,
federated Shapley value is a good quantity to indicate the contribution
for each participant.

\begin{figure}[htbp]
\centerline{\includegraphics[scale=0.4]{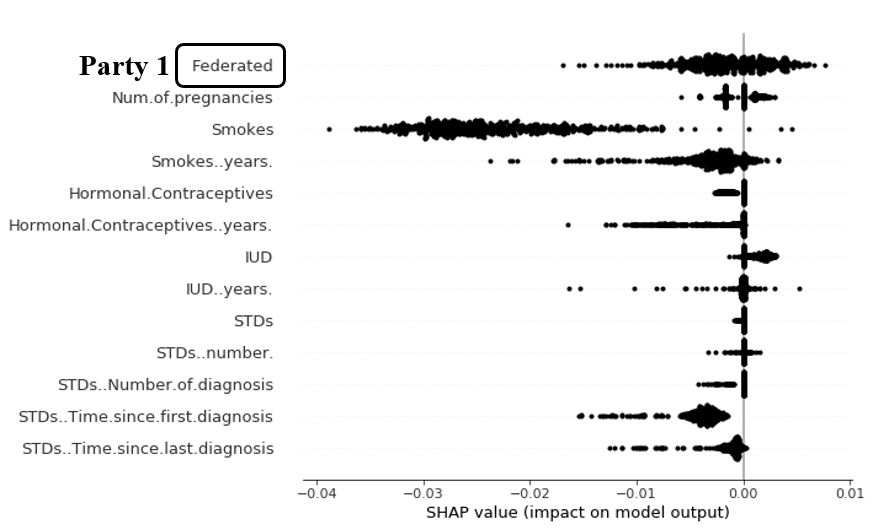}} \centerline{\includegraphics[scale=0.4]{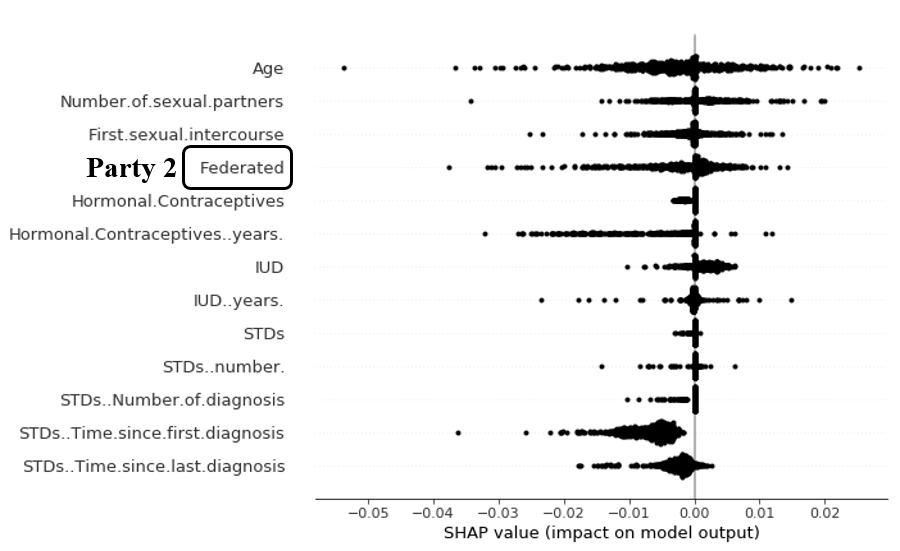}}
\centerline{\includegraphics[scale=0.4]{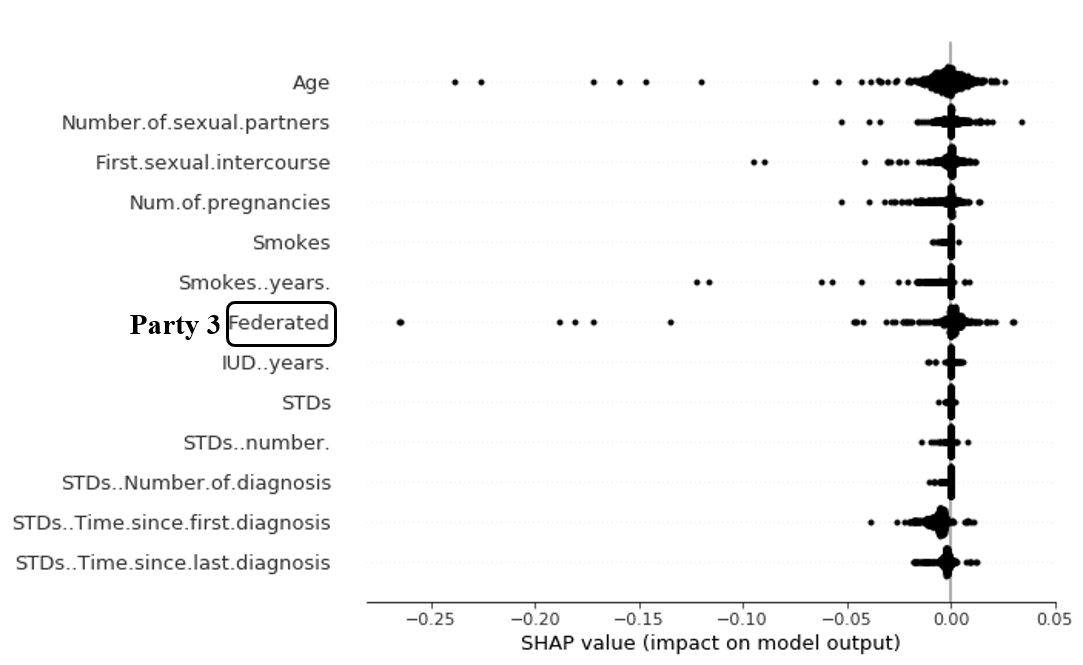}} \centerline{\includegraphics[scale=0.4]{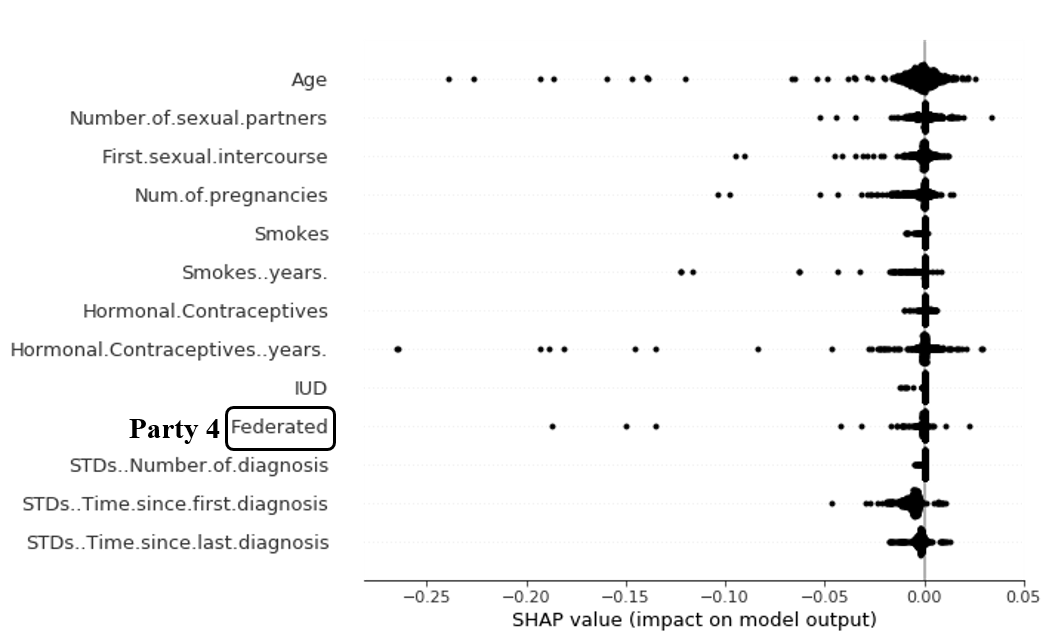}}
\centerline{\includegraphics[scale=0.4]{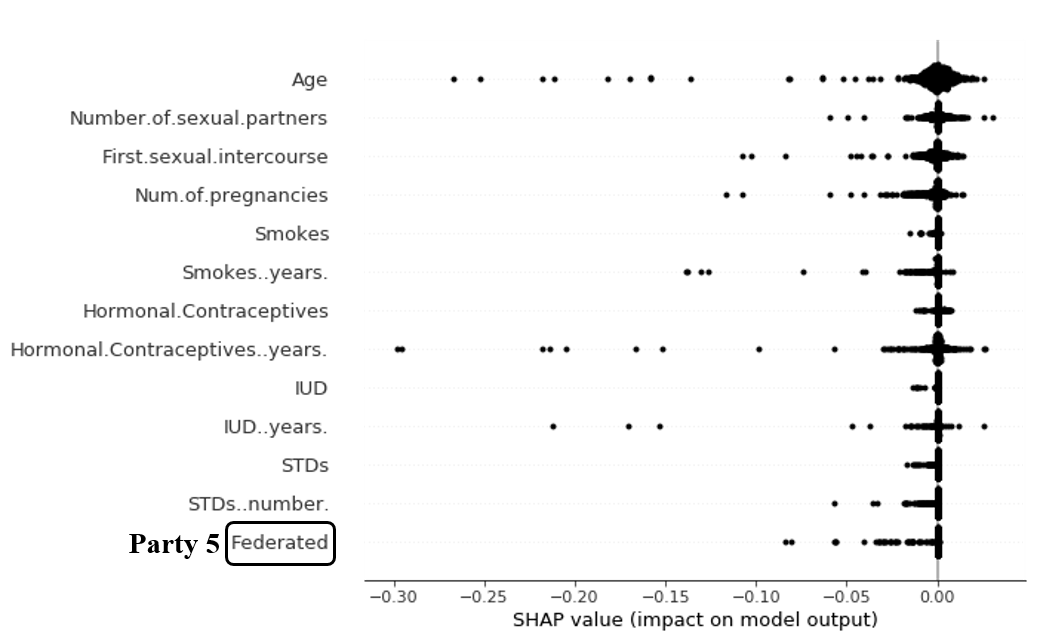}} \caption{Scatter plot for Feature Importance (Shapley values) for 855 instances.
We considered different federated groups of different features. For
combined feature has different impact on the feature importance. We
evenly separated the 15 features into 5 groups, and each group has
3 features }
\label{fig5} 
\end{figure}

\begin{figure}[htbp]
\centerline{\includegraphics[scale=0.4]{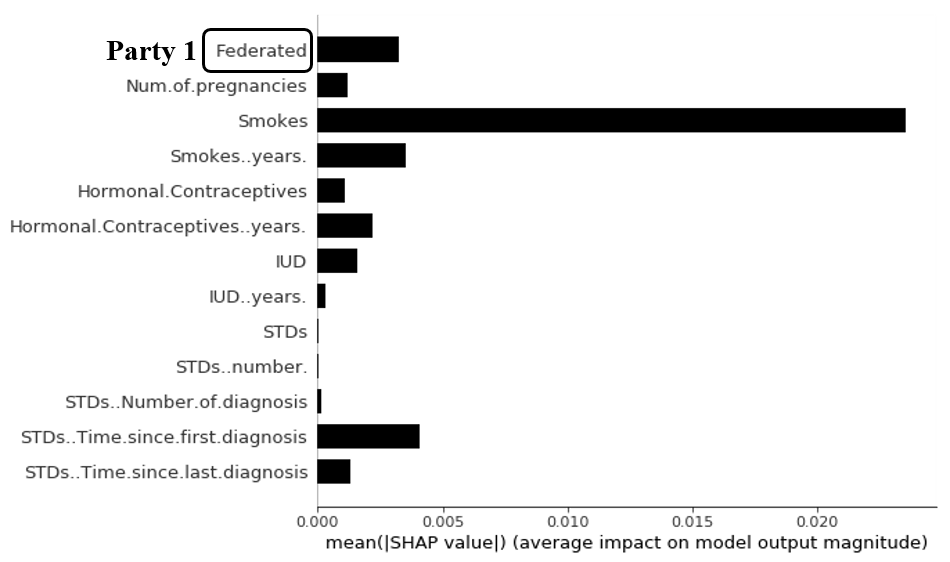}} \centerline{\includegraphics[scale=0.4]{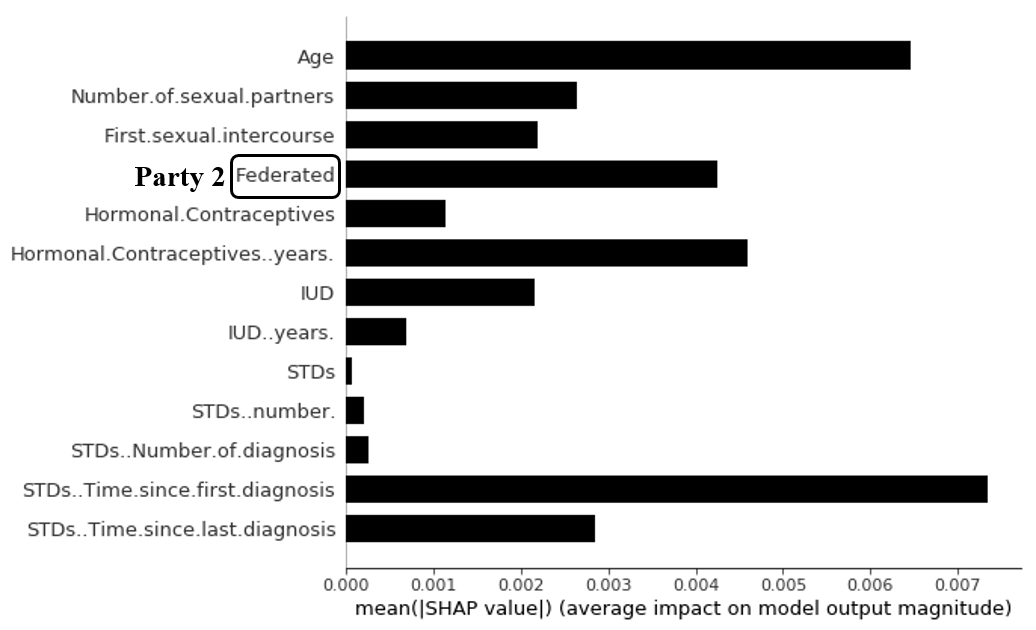}}
\centerline{\includegraphics[scale=0.4]{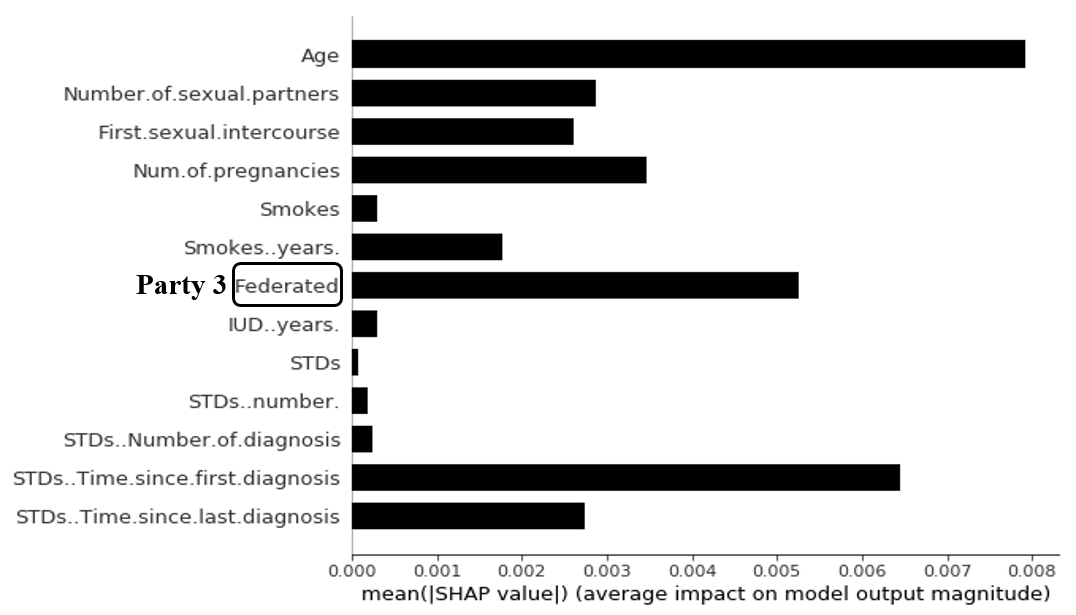}} \centerline{\includegraphics[scale=0.4]{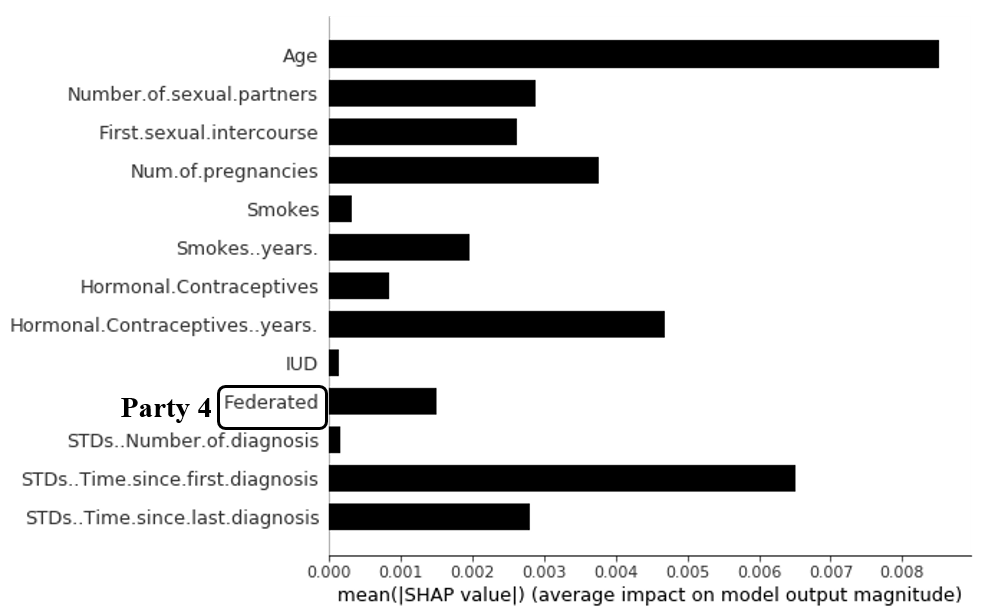}}
\centerline{\includegraphics[scale=0.4]{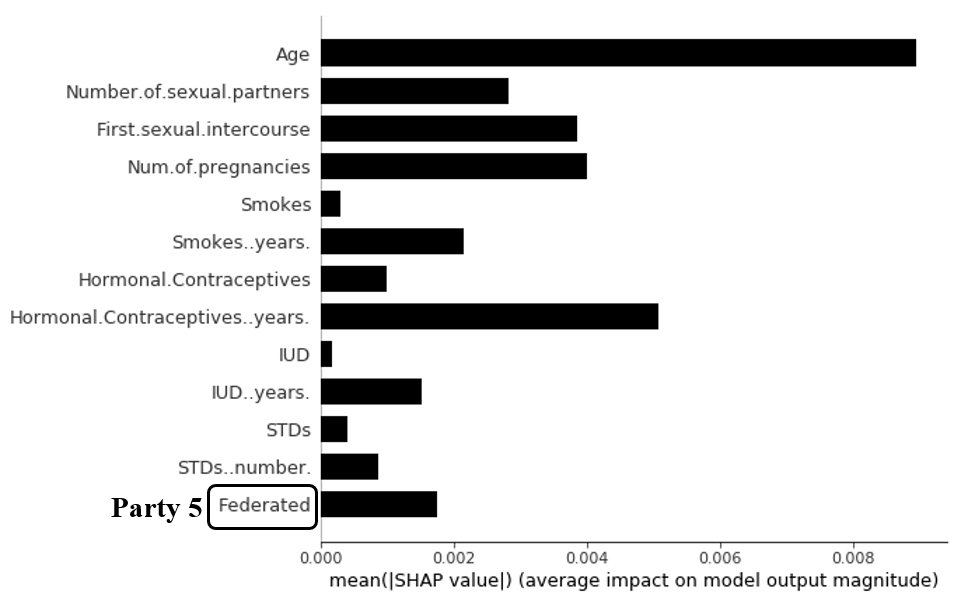}} \caption{Bar plot for average Feature Importance (Shapley values) for 855 instances.
We considered different federated groups of different features. For
combined feature has different impact on the feature importance. We
evenly separated the 15 features into 5 groups, and each group has
3 features}
\label{fig6} 
\end{figure}

\section{Conclusion}

Fair calculating the contribution of each participants in Federated
Machine Learning is crucial for credits and rewards allocation. In
this paper, we proposed methods that can calculate participant contribution
for both Horizontal FML and Vertical FML by using group instance deletion
and group Shapley values. Our experiment results indicate that our
method is effective and can give fair and reliable contribution measurements
for FML participants without disclosing the data and breaking the
initial intent of preserving data privacy.

Our work for contribution measurement for FML participants is model
agnostic, meaning that this should work for almost any kind of machine
learning algorithms, and become a general framework for this task.
We expect our work can be built into FML tool sets like FATE and TFF
and become the start of developing a standard model contribution measurement
module for Federated Learning that is critical for industrial applications.

For future work, we expect some more advanced algorithms like influential
functions \cite{b7} for horizontal FML and for some sampling version
of calculating Shapley values for vertical FML. Those algorithms will
help get an accurate and fair contribution measurement results with
higher computational efficiency.

\bigskip{}

\bigskip{}

\bigskip{}

\bigskip{}

\bigskip{}

\bigskip{}

\bigskip{}

\bigskip{}

\bigskip{}

\bigskip{}

\bigskip{}

\end{document}